\begin{document}

\title{Entropy Minimization vs. Diversity Maximization for Domain Adaptation}

\author{Xiaofu~Wu$^\dag$, Suofei~Zhang, Quan~Zhou, Zhen~Yang, Chunming~Zhao and Longin~Jan~Latecki  

\thanks{$^\dag$Corresponding author. This work was supported in part by the National Natural Science Foundation of China under Grants 61372123, 61671253 and by the Scientific Research Foundation of Nanjing University of Posts and Telecommunications under Grant NY213002.}
\thanks{Xiaofu~Wu, Quan~Zhou and Zhen~Yang are with the National Engineering Research Center of Communications and Networking, Nanjing University of Posts and Telecommunications, Nanjing 210003, China (E-mails: xfuwu@ieee.org, \{quan.zhou,yangz\}@njupt.edu.cn).}
\thanks{Suofei Zhang is with the School of Internet of Things, Nanjing University of Posts and Telecommunications, Nanjing 210003, China (E-mail: zhangsuofei@njupt.edu.cn).}
\thanks{Chunming Zhao is with the National Mobile Commun. Research Lab.,
        Southeast University, Nanjing 210096, China (Email: cmzhao@seu.edu.cn).}
\thanks{Longin Jan Latecki is with the Department of Computer and Information
Sciences, Temple University, Philadelphia, Pennsylvania, USA. (Email: latecki@
temple.edu).}}


\maketitle

\begin{abstract}
    Entropy minimization has been widely used in unsupervised domain adaptation (UDA). However, existing works reveal that entropy minimization only may result into collapsed trivial solutions. In this paper, we propose to avoid trivial solutions by further introducing diversity maximization. In order to achieve the possible minimum target risk for UDA, we show that diversity maximization should be elaborately balanced with entropy minimization, the degree of which can be finely controlled with the use of deep embedded validation in an unsupervised manner. The proposed minimal-entropy diversity maximization (MEDM) can be directly implemented by stochastic gradient descent without use of adversarial learning. Empirical evidence demonstrates that MEDM outperforms the state-of-the-art methods on four popular domain adaptation datasets.
\end{abstract}

\begin{IEEEkeywords}
Domain adaptation, image classification, entropy minimization, transfer learning, VisDA challenge.
\end{IEEEkeywords}

\newtheorem{plm}{Problem}
\newtheorem{thm}{Theorem}
\newtheorem{conj}{Conjecture}
\section{Introduction}
The recent success of deep learning depends heavily on the large-scale fully-labeled datasets and the development of easily trainable deep neural architectures under the back-propagation algorithm, such as convolutional neural networks (CNNs) and their variants \cite{He2016ResNet,Huang2017DenseNet}. In practical applications, a new target task and its dataset (target domain) may be similar to a known source task and its fully-labeled dataset (source domain). However, the difference between the source and target domains is often not negligible, which makes the previously-trained model not work well for the new task. This is known as domain shift \cite{antonio2011domain-shift}. As the cost of massive labelling is often expensive, it is very attractive for the target task to exploit any existing fully-labeled source dataset and adapt the trained model to the target domain \cite{TIP2020CRTL,TIP2020ADDA,TIP2020Mul,TIP2019LPJT,TIP2018,TIP2018DICD,chang2019bn,roy2019DWT,chen2019PFA,kim2019GPDA}.

This domain adaptation approach is aiming to learn a discriminative classifier in the presence of domain shift \cite{Chen2011CoTrain,bousmalis2016domain}. It can be achieved by optimizing the feature representation to minimize some measures of domain shift, typically defined as the distances between the source and target domain distributions or its degraded form, such as Maximum Mean Discrepancy (MMD) \cite{Long2015MMD,Tzeng2014MMD} or correlation distance \cite{Sun2016ICCV}.

With the invention of generative adversarial networks \cite{Goodfellow2014GAN}, various adversarial methods have been proposed for the purpose of unsupervised domain adaptation \cite{Ganin2015domain,bousmalis2016domain,CyCADA,MCD2018}, where the domain discrepancy distance is believed to be minimized through an adversarial objective with respect to a binary domain discriminator. The domain-invariant features could be extracted whenever this binary domain discriminator cannot distinguish between the distributions of the source and target samples \cite{Ganin2015domain,bousmalis2016domain}.

In recent years,  there is also a broad class of domain adaptation methods, which employ entropy minimization as a proxy for mitigating the the harmful effects of domain shift. The entropy minimization is performed on the target domain, which may take explicit forms \cite{Haeusser2017domain,Tzeng2015domain,Carlucci2015domain,Saito2017domain,Shu2017DIRT-T} or implicit forms \cite{Ganin2015domain,Tzeng2017domain}. Without any further regularization, it may produce trivial solutions \cite{Morerio2018MECA}.

Often, unsupervised domain adaptation (UDA) faces the challenging problem of hyperparameter selection, where the best configuration should be determined without resort to labels in the target dataset. Fortunately, deep embedded validation (DEV) \cite{DEV} tailored to UDA was recently proposed to solve this difficulty, which embeds adapted feature representation in the validation procedure to yield unbiased estimation of the target risk.

\begin{figure*}
\begin{center}
\includegraphics[width=0.65\textwidth]{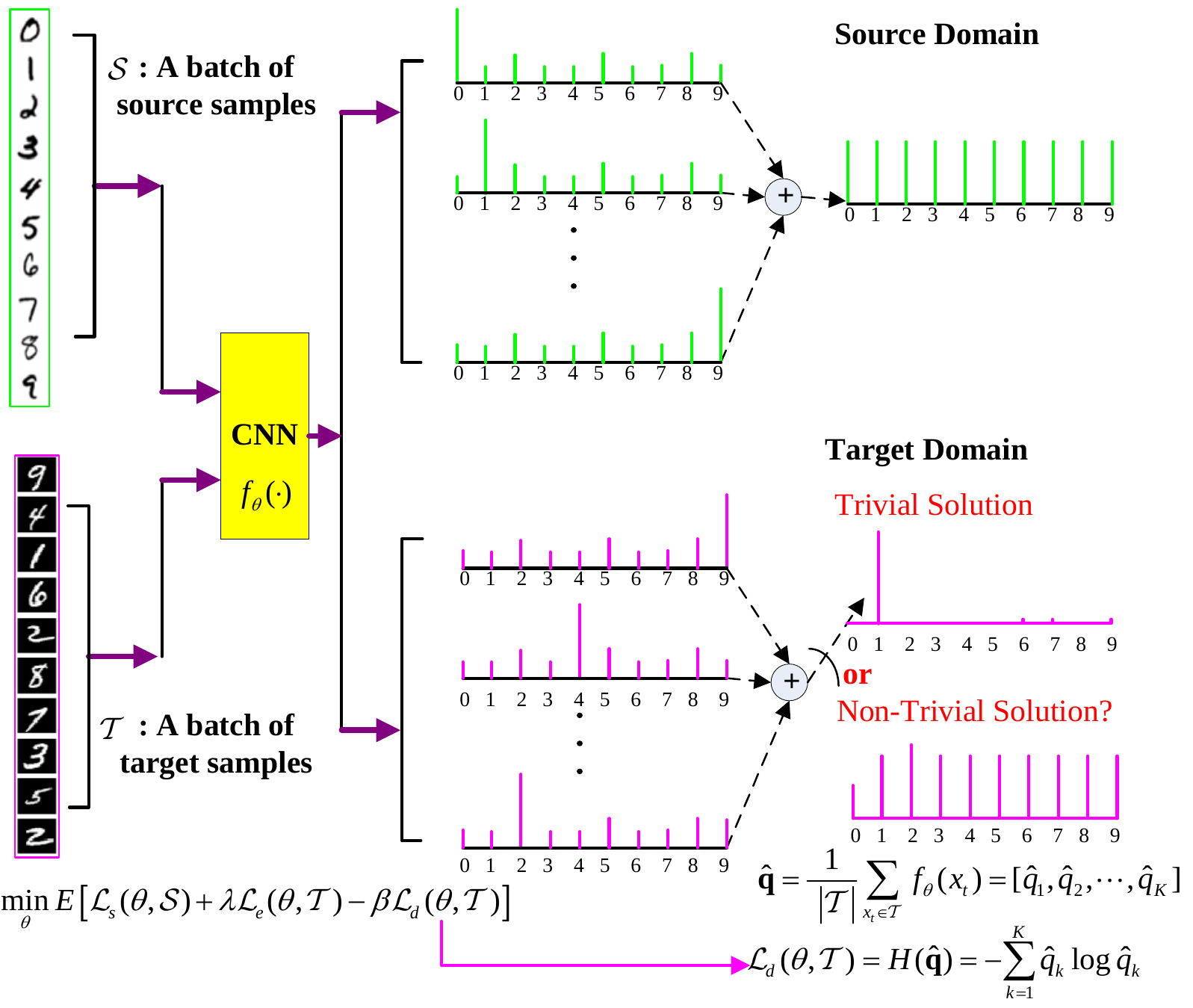} 
\end{center}
   \caption{MEDM tries to maximize the category diversity, which can push entropy minimization away from trivial solutions.}
\label{fig:cpem}
\end{figure*}

In this paper, we make contributions towards close-to-perfect domain adaptation with entropy minimization.
\begin{enumerate}
\item
We propose a minimal-entropy diversity maximization (MEDM) method for UDA. Instead of simply avoiding trivial solutions for entropy minimization, MEDM tries to find a close-to-perfect domain adaptation solution, which achieves the best possible tradeoff between entropy minimization and diversity maximization with the help of DEV \cite{DEV}.

\item
MEDM outperforms state-of-the-art methods on four domain adaptation datasets, including VisDA-2017, ImageCLEF, Office-Home and Office-31. In particular, it boosts a significant accuracy margin on the largest domain adaptation
dataset, VisDA-2017 classification challenge \footnote{All of our experimental results are reproducible and the source codes are available at \href{url}{https://github.com/AI-NERC-NUPT/MEDM}}.
\end{enumerate}

\section{Background and Related Work}
\label{gen_inst}
\subsection{Entropy-Minimization-Only}
Consider the problem of classifying an image $x$ in a $K$-classes problem. For UDA, we are given a source domain $\mathcal{D}_s =\{(x_i^s; y_i^s)\}_{i=1}^{n_s}$ of $n_s$ labeled examples and a target domain $\mathcal{D}_t =\{x_j^t\}_{j=1}^{n_t}$ of $n_t$ unlabeled examples. The source domain and target domain are sampled from joint distributions $P(x^s; y^s)$ and $Q(x^t; y^t)$ respectively, while the identically independently distributed (IID) assumption is often violated as $P\neq Q$. Hence, the problem is to exploit a bunch of labeled images in $\mathcal{D}_s$ for training a statistical classifier that, during inference, provides probabilities for a given test image $x_t \in \mathcal{D}_t$ to belong to each of the $K$ classes. In this paper, we focus on a deep neural-network based classifier $y=f_\theta(x)$ (In general, the classifier network $f_\theta$ depends upon a collection of parameters $\theta$), which provides probabilities of $x$ belonging to each class as
\begin{equation}
f_\theta(x) = \left[\mathbb{P}(y=1|x), \cdots, \mathbb{P}(y=K|x)\right].
\end{equation}

The goal is to design the classifier $y = f_\theta(x)$ such that the target risk $\epsilon_t (f_\theta) = \mathbb{E}_{(x^t; y^t)\sim Q} [f_\theta(x^t) \neq y^t]$ can be minimized. Since the target risk cannot be computed in the scenario of UDA, the domain adaptation theory \cite{BenDavid2007Bound} \cite{Blitzer2008Bound} suggests to bound the target risk with the sum of the cross-domain discrepancy $D(P;Q)$ and the source risk $\epsilon_s (f_\theta) = \mathbb{E}_{(x^s; y^s)\sim P} [f_\theta(x^s) \neq y^s]$. By jointly minimizing the source risk and the cross-domain discrepancy $D(P;Q)$, various domain adaptation methods were extensively proposed, which differ mainly in the choice of $D(P;Q)$.

For supervised learning on the source domain, the classifier is trained to minimize the standard supervised loss
\begin{eqnarray}
\mathcal{L}_s(\theta, D_s) = \frac{1}{|\mathcal{D}_s|} \sum_{(x, y)\in \mathcal{D}_s} \ell(y, f_\theta(x))
\end{eqnarray}
with $\ell(y, \hat{y})=\langle y, \hat{y} \rangle=-\sum_{j=1}^K y_j \log \hat{y}_j$ and $|\mathcal{D}_s|$ denotes the cardinality of the set $\mathcal{D}_s$.

To adapt to the unlabeled target domain, a large class of domain adaptation methods also minimize
the entropy loss on the target domain
\begin{equation}
\label{e-loss}
\mathcal{L}_e(\theta, \mathcal{D}_t) = -  \frac{1}{|\mathcal{D}_t|}  \sum_{x_t \in \mathcal{D}_t} \langle f_\theta(x_t), \log f_\theta(x_t) \rangle
\end{equation}
as an efficient regularization technique. Therefore, the standard domain adaptation method with entropy-minimization-only (EMO) seeks to solve the following problem
\begin{equation}
\label{entropy-min}
\min_{\theta} \left[\mathcal{L}_s(\theta, D_s) + \lambda \mathcal{L}_e(\theta,\mathcal{D}_t)\right], \lambda > 0.
\end{equation}

The EMO presented in (\ref{entropy-min}) was first proposed in \cite{EM_Bengio2005} for semi-supervised learning, where a decision rule is to be learned from labeled and unlabeled data, and EMO (\ref{entropy-min}) enables to incorporate unlabeled data in the standard supervised learning. For the scenario of UDA considered in this paper, the difference is that unlabeled samples are sampled from the target-domain distribution $Q$, which may differ considerably from the source-domain distribution $P$.

\subsection{Insufficiency of EMO for UDA}
Note that the minimization of the target risk $\epsilon_t (f_\theta)$ could push the network prediction $f_\theta(x_t)$ towards the true solution $y_t=[y_1,\cdots,y_K]$ with $y_k \in \{0,1\}, \sum_k y_k=1$, namely, $f_\theta(x_t) \rightarrow [0, \cdots, 1, \cdots, 0]$, which results into the minimum value of entropy (zero).  This means that entropy minimization is a necessary condition for the minimization of the target risk $\epsilon_t (f_\theta)$. Hence, \textit{entropy minimization may be more direct and simpler for end-to-end training of $\theta$ in order to minimize the target risk, compared to the use of more complicated cross-domain discrepancy}.  Unfortunately, as a necessary but not sufficient condition for minimization of the target risk $\epsilon_t(f_\theta)$ \cite{Morerio2018MECA}, this simple technique may result into trivial solutions, as demonstrated in the Appendix.

\begin{plm}
As entropy minimization is necessary but not sufficient for minimization of the target risk,  it is natural to ask if we can pose some further regularization to push the optimizer to find the global minima instead of the trivial local minima.
\end{plm}

\subsection{Existing UDA Approaches with EM}
Entropy minimization was first proposed in \cite{EM_Bengio2005} for semi-supervised learning. In many UDA scenarios with very limited domain-shift between source and target domains, it does work, as demonstrated later in experiments. When the effect of domain-shift increases, the optimization of (\ref{entropy-min}) is not enough for the purpose of UDA \cite{Morerio2018MECA}. Hence, various ancillary adaptation techniques were invoked, such as covariance alignment \cite{Morerio2018MECA}, batch normalization \cite{Carlucci2015domain} or learning by association \cite{Haeusser2017domain}.

It was argued in \cite{Morerio2018MECA} that entropy minimization could be achieved by the optimal alignment of second order statistics between source and target domains and therefore a hyper-parameter validation method was proposed for balancing  the reduction of the domain shift and the supervised classification on the source domain in an optimal way.

In \cite{Carlucci2015domain}, a novel domain alignment layer was introduced for reducing the domain shift by matching source and target distributions to a reference one and  entropy minimization was also explicitly employed, which was believed to promote classification models with high confidence on unlabeled samples.

Long et al. \cite{long2016RTN} used entropy minimization in their approach to directly measure how far samples are from a decision boundary by calculating entropy of the classifier¡¯s output.

In the appendix of \cite{Saito2018drop}, Satio et al. proposed an entropy-based adversarial dropout regularization approach, which employed the entropy of the target samples in implementing min-max adversarial training.

In \cite{CDAN}, entropy conditioning was employed that controls the uncertainty of classifier
predictions to guarantee the transferability, which can help the proposed conditional adversarial domain adaptation (CDAN) to converge to better solutions.

\section{Minimal-Entropy Diversity Maximization}
\label{headings}
\subsection{Proposed Method}
As the training of network is often implemented over batches of samples, the supervised loss for a given source batch $\mathcal{S}$ (for example, $|\mathcal{S}|=32$ for the batch size of 32) is accordingly modified as
\begin{eqnarray}
\mathcal{L}_s(\theta,\mathcal{S}) = \frac{1}{|\mathcal{S}|} \sum_{(x, y)\in \mathcal{S}} \ell(y, f_\theta(x)).
\end{eqnarray}

As shown in Figure \ref{fig:cpem}, domain adaptation requires some regularization techniques for pushing the network towards correct class prediction of unlabeled target samples. For ease of implementation, the regularization is often performed over batches of target samples. Let $\mathcal{T} \subset \mathcal{D}_t$ be any random batch of samples from target domain, which has the same batch size as that of $\mathcal{S}$, namely,  $|\mathcal{T}| = |\mathcal{S}|$.

With each unlabeled image $x_t \in \mathcal{T}$ as input, we can perform inference over the network $f_\theta$ to obtain the softmax predictions $f_\theta(x_t)$. Then, we can compute the predicted category distribution in $\mathcal{T}$ as
\begin{eqnarray}
\label{cat_prob}
\hat{\mathbf{q}}(\mathcal{T}) = \frac{1}{|\mathcal{T}|} \sum_{x_t \in \mathcal{T}} f_\theta(x_t) \triangleq [\hat{q}_1, \hat{q}_2, \cdots, \hat{q}_K],
\end{eqnarray}
where
\begin{eqnarray*}
\hat{q}_k = \frac{1}{|\mathcal{T}|} \sum_{x_t \in \mathcal{T}}\mathbb{P}(y_t=k|x_t),
\end{eqnarray*}
and $\sum_{k=1}^K \hat{q}_k  = 1$. \textit{Note that $\hat{\mathbf{q}}(\mathcal{T})$ is computed over $\mathcal{T}$, which is dynamically changed during the batch-based training}.

Without any labelling information available, the mean entropy loss over $\mathcal{T}$ can be computed as
\begin{equation}
\label{e-loss}
\mathcal{L}_e(\theta,\mathcal{T})=  - \frac{1}{|\mathcal{T}|} \sum_{x_t \in \mathcal{T}} \langle f_\theta(x_t), \log f_\theta(x_t)\rangle.
\end{equation}

The use of EMO may produce trivial solutions as shown in Figure \ref{fig:cpem}.  By noting that a trivial solution shown in Figure \ref{fig:cpem} often has just one category, a nontrivial domain adaptation method may resort to producing sufficient category diversity in its solution.

In this paper, we employ the entropy of $\hat{\mathbf{q}}(\mathcal{T})=[\hat{q}_1, \hat{q}_2, \cdots, \hat{q}_K]$ (\ref{cat_prob}) for measuring the category diversity in a given target batch $\mathcal{T}$. Formally, this category diversity over $\mathcal{T}$ can be measured as
\begin{equation}
\label{div-loss}
\mathcal{L}_d(\theta, \mathcal{T}) \triangleq H\left(\mathbf{\hat{q}}(\mathcal{T})\right) = -\sum_{k=1}^K \hat{q}_k \log \hat{q}_k.
\end{equation}
As this diversity metric does not require any priori information about the true category distribution $\mathbf{q}$ over $\mathcal{D}_t$, its computation is easy to implement in practice. Note that random shuffling should be employed in training for maximizing (\ref{div-loss}).

The objective of the proposed MEDM is to
\begin{eqnarray}
\label{total-loss}
&&\min_{\theta} E_{\mathcal{S}, \mathcal{T}} \left[ \mathcal{L}_s(\theta, \mathcal{S}) + \lambda \mathcal{L}_e(\theta, \mathcal{T}) - \beta \mathcal{L}_d(\theta, \mathcal{T})\right]
\end{eqnarray}
where $E[\cdot]$ denotes the expectation and $\lambda, \beta \ge 0$ are weighting factors.  Given $\lambda,\beta$, this involves the optimization of $\theta$ for the minimization of a single total loss (\ref{total-loss}), which can be directly implemented by stochastic gradient descent without use of adversarial learning.

\subsection{Entropy-Minimization vs. Diversity-Maximization}
As shown in (\ref{total-loss}), our proposed MEDM may encourage to make prediction evenly across the batch, which, however, does not necessarily produce the evenly-distributed categories. Let $\mathbf{q} =[q_1,\cdots, q_K]$  be the true category distribution of the target dataset, where $q_k$ denotes the proportion of samples of the $k$-th class among all target samples.

\begin{thm}
Consider the EMO method ($\beta=0$) in (\ref{total-loss}). If there exists a solution $\theta^*$ of (\ref{total-loss}) with optimal entropy minimization, we have that
\begin{eqnarray*}
E_{\mathcal{T}}\left[\mathcal{L}_d(\theta^*, \mathcal{T})\right] = H(\mathbf{q^*}),
\end{eqnarray*}
where $\mathbf{q}^*=[q_1^*,\cdots, q_K^*]$ is the inferred category distribution of the target dataset when inferring over the network $\theta^*$.
\end{thm}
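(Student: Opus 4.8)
\textit{Proof plan.} My plan is to read ``optimal entropy minimization'' as the statement that $\theta^*$ drives the expected entropy loss to its floor, $E_{\mathcal{T}}[\mathcal{L}_e(\theta^*,\mathcal{T})]=0$, and then to peel this apart. Since $\mathcal{L}_e(\theta,\mathcal{T})$ in (\ref{e-loss}) is the average over $x_t\in\mathcal{T}$ of the Shannon entropies $H(f_\theta(x_t))$, and a uniformly drawn batch samples $\mathcal{D}_t$ without bias, one has $E_{\mathcal{T}}[\mathcal{L}_e(\theta,\mathcal{T})]=\frac{1}{n_t}\sum_{x_t\in\mathcal{D}_t}H(f_\theta(x_t))$, a sum of nonnegative terms each vanishing only at a vertex of the $K$-simplex. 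Hence optimality forces $f_{\theta^*}(x_t)$ to be one-hot for every $x_t\in\mathcal{D}_t$, i.e.\ $\theta^*$ induces a deterministic hard labelling $c^*:\mathcal{D}_t\to\{1,\dots,K\}$ with $[f_{\theta^*}(x_t)]_{c^*(x_t)}=1$.

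With this one-hot structure, I would substitute into (\ref{cat_prob}) to see that $\hat{\mathbf{q}}(\mathcal{T})$ is exactly the vector of within-batch label frequencies, $\hat{q}_k=|\{x_t\in\mathcal{T}:c^*(x_t)=k\}|/|\mathcal{T}|$, while the inferred target-set distribution is $q_k^*=\frac{1}{n_t}\sum_{x_t\in\mathcal{D}_t}[f_{\theta^*}(x_t)]_k=|\{x_t\in\mathcal{D}_t:c^*(x_t)=k\}|/n_t$. Because the batch is drawn uniformly, $E_{\mathcal{T}}[\hat{\mathbf{q}}(\mathcal{T})]=\mathbf{q}^*$, so the target reduces to showing $E_{\mathcal{T}}[H(\hat{\mathbf{q}}(\mathcal{T}))]=H(\mathbf{q}^*)$.

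That last identity is the part that needs care and is the real content of the statement: concavity of $H$ and Jensen's inequality only give $E_{\mathcal{T}}[H(\hat{\mathbf{q}}(\mathcal{T}))]\le H(E_{\mathcal{T}}[\hat{\mathbf{q}}(\mathcal{T})])=H(\mathbf{q}^*)$, with equality precisely when $\hat{\mathbf{q}}(\mathcal{T})$ does not vary over batches. I would therefore establish the claim in the population/large-batch regime: as the batch grows the frequency vector $\hat{\mathbf{q}}(\mathcal{T})$ concentrates on $\mathbf{q}^*$ by the law of large numbers, and continuity of $H$ together with bounded convergence gives $E_{\mathcal{T}}[\mathcal{L}_d(\theta^*,\mathcal{T})]=H(\mathbf{q}^*)$; in the full-batch case $\mathcal{T}=\mathcal{D}_t$ one simply has $\hat{\mathbf{q}}(\mathcal{T})=\mathbf{q}^*$ and the expectation is immediate. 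Everything else is bookkeeping; the conceptual payoff is that EMO alone pins the diversity term to $H(\mathbf{q}^*)$, the entropy of whatever (possibly degenerate) label distribution the network happens to infer, with nothing forcing $\mathbf{q}^*$ toward the true $\mathbf{q}$, which is what motivates adding $-\beta\mathcal{L}_d$.
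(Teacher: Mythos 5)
Your proof follows essentially the same route as the paper's: optimal entropy minimization forces one-hot predictions on every target sample, which turns $\hat{\mathbf{q}}(\mathcal{T})$ into the batch label-frequency vector with $E_{\mathcal{T}}[\hat{\mathbf{q}}(\mathcal{T})]=\mathbf{q}^*$, and the claim then reduces to $E_{\mathcal{T}}[H(\hat{\mathbf{q}}(\mathcal{T}))]=H(\mathbf{q}^*)$. Your handling of that last step is in fact more careful than the paper's, which jumps from $E_{\mathcal{T}}[\hat{q}_k]=q_k^*$ to $E_{\mathcal{T}}[H(\hat{\mathbf{q}})]=H(\mathbf{q}^*)$ without comment, whereas you correctly note that concavity of $H$ only gives ``$\le$'' via Jensen and that equality requires the concentration of $\hat{\mathbf{q}}(\mathcal{T})$ around $\mathbf{q}^*$ (large-batch or full-batch regime) -- a justification the paper's own proof silently omits.
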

\begin{proof}
In the case of $\beta=0$, diversity maximization is not included in (\ref{total-loss}). With optimal entropy minimization, it means that
\begin{eqnarray*}
\mathcal{L}_e(\theta^*, \mathcal{T}) = 0.
\end{eqnarray*}
Since the entropy is always non-negative, we have that
\begin{eqnarray*}
-\langle f_\theta(x_t), \log f_\theta(x_t)\rangle = 0, \forall x_t \in \mathcal{T}.
\end{eqnarray*}
Hence, the network prediction $f_\theta(x_t)$ for any $x_t \in \mathcal{T}$ should present a peaky form, namely, $f_\theta(x_t) \rightarrow [0, \cdots, 1, \cdots, 0]$.

Given a random batch of samples $\mathcal{T}$ inputting to the network $\theta^*$, we have that
\begin{eqnarray*}
E_{\mathcal{T}}\{\hat{q}_k\} =\frac{1}{|\mathcal{T}|} E_{\mathcal{T}} \left\{\sum_{x_t \in \mathcal{T}} f_{\theta^*}^k (x_t)\right\} = \frac{1}{|\mathcal{T}|} \cdot (q_k^* |\mathcal{T}|) = q_k^*.
\end{eqnarray*}
Therefore,
\begin{eqnarray*}
E_{\mathcal{T}}\left\{\mathcal{L}_d(\theta^*, \mathcal{T})\right\} = E_{\mathcal{T}} \{H(\mathbf{\hat{q}})\}=  H(\mathbf{q^*}).
\end{eqnarray*}
\end{proof}

Without the use of diversity maximization, EMO often results into trivial solutions, namely, $\max q^*_k \gg  1- \max_k q^*_k$, where the predicted single-class samples may dominate among others. With the use of diversity maximization, it may encourage to make prediction evenly across the batch, since the maximum value of $\mathcal{L}_d(\theta^*, \mathcal{T})$ could be achieved whenever $\mathbf{q^*}=[1/K, \cdots,1/K]$. Actually, there exists a tradeoff by adjusting the parameters $\lambda,\beta$ as justified in what follows.

Assume that the network $f_\theta(x)$ could be decomposed into two subnetworks, namely, $f_\theta(x) = C(F(x))$ , where $F$ denotes a feature extraction subnetwork and $C$ denotes a classifier over the feature space $\mathcal{F}$. For domain-adaptation, it is often assumed that the conditional distributions are unchanged by $F$, i.e., $P(y|F(x)) = Q(y|F(x))$.

\begin{algorithm}
    \caption{Fast Model Selection Process in MEDM}
    \label{simProc}
    \begin{algorithmic}[1]
    \REQUIRE  $\mathcal{D}_s=\mathcal{D}_{\text{train}} \cup \mathcal{D}_{\text{val}}, \mathcal{D}_t$; \\
    $\lambda_1^L=\{\lambda_l\}_{i=1}^L, \beta_1^B=\{\beta_l\}_{l=1}^B$
        \STATE {Training Initialization: $\beta \leftarrow 0$.}

        \medskip

        \FOR {$\lambda \leftarrow$ $\lambda_1, \cdots, \lambda_L$}
            \STATE {Training the network $\theta_i$ over $\mathcal{D}_{\text{train}}$ and $\mathcal{D}_t$}:
\begin{eqnarray*}
\theta_i = \arg \min_{\theta} E \left[ \mathcal{L}_s(\theta,\mathcal{S}) + \lambda \mathcal{L}_e(\theta,\mathcal{T})\right]
\end{eqnarray*}
            \STATE {If $\mathcal{L}_e(\theta_i, \mathcal{T}) \rightarrow 0$}: $\lambda^*=\lambda$ and break
        \ENDFOR

        \medskip

        \FOR {$\beta \leftarrow$ $\beta_1, \cdots, \beta_B$}
            \STATE {Training the network $\theta_l$ over $\mathcal{D}_{\text{train}}$ and $\mathcal{D}_t$}:
\begin{eqnarray*}
\theta_l = \arg\min_{\theta} E \left[ \mathcal{L}_s(\theta,\mathcal{S}) + \lambda^* \mathcal{L}_e(\theta, \mathcal{T}) - \beta \mathcal{L}_d(\theta, \mathcal{T})\right]
\end{eqnarray*}
        \ENDFOR
        \STATE{\bf Deep Embedded Validation \cite{DEV}:}
        \begin{enumerate}
        \item
        Get DEV Risks of all models $\mathcal{R} = \{\text{GetRisk}(\theta_l)\}_{l=1}^B$ over $\mathcal{D}_{\text{val}}$
        \item
        Rank the best model $l^* = \arg \min_{1\le l \le B} \mathcal{R}_l$
        \end{enumerate}
    \end{algorithmic}
\end{algorithm}

Let $\mathbf{X}_s = \{x_s\}_{s=1}^{n_s}$  and $\mathbf{X}_t = \{x_t\}_{t=1}^{n_t}$. Then, whenever $x_t \in F^{-1}\left(F(\mathbf{X}_s) \cap F(\mathbf{X}_t)\right)$, one can expect that the network can give a correct prediction with minimal entropy. While $x_t \in F^{-1}\left(F(\mathbf{X}_t) - F(\mathbf{X}_s)\right)$, it may encourage to make prediction towards a single class with entropy minimization, since there are simply no other constraints to be enforced.
Hence, we have the following conjecture.
\begin{conj} Consider the EMO method ($\beta=0$) in (\ref{total-loss}).  If there exists a solution $\theta^*$ of (\ref{total-loss}) with optimal entropy minimization, we have that
\begin{eqnarray*}
E_{\mathcal{T}}\left[\mathcal{L}_d(\theta^*, \mathcal{T})\right] \le H(\mathbf{q}),
\end{eqnarray*}
where $\mathbf{q}=[q_1,\cdots, q_K]$ is the ground-truth category distribution of the target dataset.
\end{conj}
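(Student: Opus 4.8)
The plan is to reduce the conjecture to Theorem~1 and then analyze where entropy minimization sends the ``uncovered'' target samples. By Theorem~1, if $\theta^*$ attains optimal entropy minimization then $E_{\mathcal{T}}\left[\mathcal{L}_d(\theta^*,\mathcal{T})\right] = H(\mathbf{q}^*)$, so the statement to prove is simply $H(\mathbf{q}^*) \le H(\mathbf{q})$: the inferred category distribution is no more diverse than the ground-truth one. Since optimal entropy minimization forces every prediction $f_{\theta^*}(x_t)$ onto a vertex of the probability simplex, both $\mathbf{q}^*$ and $\mathbf{q}$ are induced by hard labelings of the same finite set $\mathbf{X}_t$ --- $\mathbf{q}^*$ by $x_t \mapsto \arg\max_k f_{\theta^*}^k(x_t)$ and $\mathbf{q}$ by $x_t \mapsto y_t$ --- and the whole task becomes a comparison of the two resulting class histograms.

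First I would partition $\mathbf{X}_t$ into the covered part $A = F^{-1}\!\left(F(\mathbf{X}_s)\cap F(\mathbf{X}_t)\right)$ and the private part $B = F^{-1}\!\left(F(\mathbf{X}_t) - F(\mathbf{X}_s)\right)$, and set $\alpha = |A|/n_t$. On $A$, the assumption $P(y|F(x)) = Q(y|F(x))$ together with the source training of $C$ makes the sharpened prediction land on the true class, so the hard labeling agrees with the ground truth on $A$; denote by $\mathbf{q}^{(A)}$ the normalized class histogram of $A$, which is then a common component of both distributions. Consequently $\mathbf{q}^* = \alpha\,\mathbf{q}^{(A)} + (1-\alpha)\,\mathbf{p}$ and $\mathbf{q} = \alpha\,\mathbf{q}^{(A)} + (1-\alpha)\,\mathbf{r}$, where $\mathbf{p}$ is the histogram of the \emph{predicted} labels on $B$ and $\mathbf{r}$ that of the \emph{true} labels on $B$.

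Next I would argue that on $B$ --- where, as noted just before the conjecture, nothing constrains the output besides entropy minimization --- the predictions collapse onto very few classes, and, by smoothness of $C\circ F$, onto classes that already carry weight near $F(B)$ in the source, hence onto classes emphasized by $\mathbf{q}^{(A)}$. Granting this, $\mathbf{p}$ is a pooling/concentration of $\mathbf{r}$ aligned with $\mathbf{q}^{(A)}$, and the inequality $H\!\left(\alpha\mathbf{q}^{(A)} + (1-\alpha)\mathbf{p}\right) \le H\!\left(\alpha\mathbf{q}^{(A)} + (1-\alpha)\mathbf{r}\right)$ then follows from concavity of the entropy and its grouping property (merging atoms of a distribution never increases $H$). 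In the fully adapted limit $B = \varnothing$ one has $\mathbf{q}^* = \mathbf{q}$ and equality. The passage from $E_{\mathcal{T}}\left[\mathcal{L}_d(\theta^*,\mathcal{T})\right]$ to the dataset-level entropy is handled exactly as in the proof of Theorem~1, via $E_{\mathcal{T}}[\hat q_k] = q_k^*$.

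The main obstacle is the \emph{compatibility} claim in the previous step: with $\beta = 0$ no term of (\ref{total-loss}) biases the collapse, so one must supply an extra hypothesis --- either that $B$ is negligible, or a regularity condition (a Lipschitz bound on $C\circ F$, or a margin condition on the source classifier) that steers a private input's prediction toward the dominant class of its nearest source features --- to rule out the mass of a rare true class being inflated enough to make $H(\mathbf{q}^*) > H(\mathbf{q})$. Indeed, a one-dimensional two-class example in which the private samples are pushed onto the rarer class already violates the bound, which is precisely why the statement is posed as a conjecture rather than a theorem; the realistic expectation is that it holds under mild domain-overlap assumptions, and the argument above becomes a proof once such a hypothesis is imposed.
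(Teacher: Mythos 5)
The paper does not prove this statement: it is posed as Conjecture~1 and is supported only by the informal remark immediately preceding it, namely that target samples in $F^{-1}\left(F(\mathbf{X}_s)\cap F(\mathbf{X}_t)\right)$ can be expected to be classified correctly while those in $F^{-1}\left(F(\mathbf{X}_t)-F(\mathbf{X}_s)\right)$ are pushed toward a single class because nothing else constrains them. Your attempt is essentially a formalization of exactly that remark: the reduction via Theorem~1 to $H(\mathbf{q}^*)\le H(\mathbf{q})$ is correct and matches the paper's framework, the $A$/$B$ decomposition is the paper's own, and your identification of the ``compatibility'' step as the irreducible gap --- that with $\beta=0$ no term in the objective forces the collapse on the private part $B$ to land on classes already dominant in $\mathbf{q}^{(A)}$ --- is precisely why the authors leave this as a conjecture rather than a theorem. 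Your two-class sketch in which private samples are pushed onto the rarer class, inflating its inferred mass and potentially making $H(\mathbf{q}^*)>H(\mathbf{q})$, is a legitimate reason the statement cannot hold unconditionally. One additional soft spot worth flagging: even granting that $\mathbf{p}$ is a deterministic merging of $\mathbf{r}$, the inequality $H\left(\alpha\mathbf{q}^{(A)}+(1-\alpha)\mathbf{p}\right)\le H\left(\alpha\mathbf{q}^{(A)}+(1-\alpha)\mathbf{r}\right)$ does not follow from concavity or the grouping property alone once a common component $\alpha\mathbf{q}^{(A)}$ is mixed in; you would need the merged mixture to majorize the unmerged one (Schur-concavity of $H$), and that is exactly where the ``alignment'' hypothesis carries all the weight. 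So your proposal does not close the gap --- neither does the paper --- but it is a faithful and more explicit account of the evidence the authors offer, together with an accurate diagnosis of what extra assumption (negligible $B$, or a margin/regularity condition steering private predictions toward locally dominant source classes) would be needed to upgrade the conjecture to a theorem.
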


Let $f_{\theta^*}$ be the perfect domain-adaptation classier, which minimizes combined source and target risk \cite{Blitzer2008Bound}:
\begin{equation}
\label{eq:ideal}
f_{\theta^*}=\arg \min_{f_\theta \in \mathcal{H}} \epsilon_s(f_{\theta}) + \epsilon_t(f_{\theta}).
\end{equation}
where $\mathcal{H}$ is the space of classifiers. Therefore, when inferring target samples over $f_{\theta^*}$, one can expect reliable predictions or \textit{predictions with low entropies}. Hence, Theorem 1 may still hold in this case. By restricting $\mathcal{H}$ to be the solution space of (\ref{total-loss}) with $\lambda,\beta \ge 0$, we expect that the same conclusion holds. In fact, \textit{extensive experiments show that an explicit inclusion of entropy minimization in (\ref{total-loss}) can easily drive the trained CNN model towards small predictive entropy, even coexisting with diversity maximization}. Therefore, we believe that the perfect domain-adaptation classifier under the framework of (\ref{total-loss}) may output predictions with low entropies, which means that Theorem 1 may still hold, as shown in what follows.

\begin{conj} Consider the perfect domain-adaptation solution of $f_{\theta^*}^{(\lambda^*,\beta^*)}$ under the framework of (\ref{total-loss}), namely,
\begin{eqnarray}
f_{\theta^*}^{(\lambda^*,\beta^*)}=\arg \min_{\theta,\lambda,\beta} \epsilon_s(f_{\theta}^{(\lambda,\beta)}) + \epsilon_t(f_{\theta}^{(\lambda,\beta)}),
\end{eqnarray}
we have that
\begin{eqnarray*}
E_{\mathcal{T}}\left[\mathcal{L}_d(\theta^*, \mathcal{T})\right] \approx H(\mathbf{q}^*),
\end{eqnarray*}
where $\mathbf{q}^*=[q_1^*,\cdots, q_K^*]$ is the inferred category distribution of the target dataset over the network $\theta^*$.
\end{conj}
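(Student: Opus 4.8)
The plan is to reduce Conjecture 2 to Theorem 1 by showing that the combined-risk-optimal model $f_{\theta^*}^{(\lambda^*,\beta^*)}$ necessarily produces near-deterministic predictions on the target domain. Concretely, since this model minimizes $\epsilon_s(f_\theta^{(\lambda,\beta)}) + \epsilon_t(f_\theta^{(\lambda,\beta)})$ over the solution space of (\ref{total-loss}), and every model in that space is trained with an explicit entropy penalty $\lambda \mathcal{L}_e(\theta,\mathcal{T})$, one expects $\mathcal{L}_e(\theta^*,\mathcal{T}) \approx 0$, equivalently $f_{\theta^*}(x_t) \approx [0,\dots,1,\dots,0]$ for (almost) every target sample $x_t$. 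I would make this step precise either by positing it as a mild hypothesis --- say, the perfect solution has predictive entropy at most $\delta$ on a fraction $1-\eta$ of target samples --- or by invoking a quantitative inequality bounding average predictive entropy in terms of $\epsilon_t$ together with the explicit penalty $\mathcal{L}_e$, as already argued heuristically in the paragraph preceding the conjecture.

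Granting near-one-hot target predictions, the argument of Theorem 1 carries over up to error terms. Let $q_k^*$ be the fraction of target samples assigned to class $k$ by $\theta^*$, i.e. the inferred category distribution $\mathbf{q}^*$. For a random batch $\mathcal{T}$ obtained by shuffling $\mathcal{D}_t$, $E_{\mathcal{T}}\{\hat q_k\} = \frac{1}{|\mathcal{T}|} E_{\mathcal{T}}\{\sum_{x_t\in\mathcal{T}} f_{\theta^*}^k(x_t)\} \approx \frac{1}{|\mathcal{T}|}(q_k^*|\mathcal{T}|) = q_k^*$, the approximation absorbing the $O(\delta+\eta)$ deviation from exact one-hot outputs. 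Then, using continuity of the entropy functional $H$ and concentration of the batch histogram $\hat{\mathbf{q}}(\mathcal{T})$ around its mean $\mathbf{q}^*$ under random shuffling (a law-of-large-numbers effect that sharpens as $|\mathcal{T}|$ grows), one obtains $E_{\mathcal{T}}\{\mathcal{L}_d(\theta^*,\mathcal{T})\} = E_{\mathcal{T}}\{H(\hat{\mathbf{q}}(\mathcal{T}))\} \approx H(\mathbf{q}^*)$, which is the claim; combined with Theorem 1 this is also consistent with Conjecture 1, since $H(\mathbf{q}^*) \le H(\mathbf{q})$ would then follow.

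Two steps are the real obstacles. The first, and principal, one is justifying that the combined-risk-optimal model under (\ref{total-loss}) indeed has vanishing predictive entropy on the target: a small target risk $\epsilon_t$ forces predictions to be \emph{correct} on average but not necessarily \emph{confident}, so the argument genuinely relies on the explicit presence of $\mathcal{L}_e$ in the objective and on the empirical observation that including $\mathcal{L}_e$ drives the trained CNN toward small predictive entropy even in the presence of diversity maximization. A fully rigorous treatment must either elevate this to an assumption or derive a bound, which is precisely why the statement is phrased with ``$\approx$'' and as a conjecture. The second, milder obstacle is controlling the Jensen gap $H(\mathbf{q}^*) - E_{\mathcal{T}}\{H(\hat{\mathbf{q}}(\mathcal{T}))\} \ge 0$ coming from concavity of $H$: one needs a variance bound on $\hat{\mathbf{q}}(\mathcal{T})$ of order $1/|\mathcal{T}|$ to show this gap is negligible at the batch sizes used, and then to combine it with the $O(\delta+\eta)$ error from the first step into the final approximate equality.
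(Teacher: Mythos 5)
Your proposal follows essentially the same route as the paper's own (informal) justification of this conjecture: argue that the explicit entropy penalty $\lambda\mathcal{L}_e$ drives the combined-risk-optimal model toward near-one-hot target predictions, then transport the computation from Theorem 1 to get $E_{\mathcal{T}}\left[\mathcal{L}_d(\theta^*,\mathcal{T})\right] \approx H(\mathbf{q}^*)$, with random shuffling providing the concentration of $\hat{\mathbf{q}}(\mathcal{T})$ around $\mathbf{q}^*$. The two obstacles you flag --- that small $\epsilon_t$ forces correctness but not confidence, and the Jensen gap $E_{\mathcal{T}}\{H(\hat{\mathbf{q}}(\mathcal{T}))\} \le H(\mathbf{q}^*)$ from concavity of $H$ --- are precisely why the statement is only a conjecture, and the second is in fact glossed over even in the paper's proof of Theorem 1, so your account is, if anything, more explicit about the missing pieces than the original.
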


Since the target risk for $f_{\theta^*}^{(\lambda^*,\beta^*)}$ is expected to be small, we have that $\mathbf{q}^* \rightarrow \mathbf{q}$ and $H(\mathbf{q}^*)\rightarrow H(\mathbf{q})$.  With \textit{random shuffling} for batch-based training, $\mathcal{L}_d(\theta^*, \mathcal{T}) \rightarrow H(\mathbf{q}^*)$ holds with a high probability whenever the training process for (\ref{total-loss}) under the setting of $(\lambda^*,\beta^*)$ converges. This may partially support the reasonability of the use of (\ref{total-loss}).

Although we do not know the perfect domain-adaptation solution of $f_{\theta^*}^{(\lambda^*,\beta^*)}$, one can search over the space of $(\lambda,\beta)$ and further resort to the validation technique \cite{DEV}. The essential process can be stated as follows.  When $\beta=0$ in (\ref{total-loss}), it often results in the trivial solution of one domain single class, which means that $\mathcal{L}_d(\theta^*, \mathcal{T}) \approx 0$. When $\beta$ increases from 0, we would expect that the category diversity (\ref{div-loss}) increases correspondingly, which can help to avoid the trivial solutions. The problem now is how to determine the best value of $\lambda,\beta$ in (\ref{total-loss}) for finding a close-to-perfect domain adaptation solution. Fortunately, this was recently investigated in \cite{DEV}.
\begin{table}
\caption{Accuracy (\%) of ResNet-50 model on VisDA-2017}
\begin{center}
\label{my-label}
\begin{tabular}{l||c}
\toprule[1.5pt]
Method     & Synthetic $\rightarrow$ Real \\\hline
GTA~\cite{GTK}      & 69.5 \\
MCD~\cite{MCD2018} & {69.8}\\
CDAN~\cite{CDAN} & {70.0}\\
MDD ~\cite{MDD} & {74.6}\\\hline
MEDM(ours) & {\bf 79.6}\\

\bottomrule[1.5pt]
\end{tabular}
\end{center}
\label{tb:visda50}
\end{table}
\subsection{Model Selection via Deep Embedded Validation}
For the proposed MEDM, the selection of hyperparameters ($\lambda,\beta$) is of great importance for the final performance. For UDA, the model selection should be decided without access to the labels in the target dataset.
Fortunately, the recently-proposed deep embedded validation \cite{DEV} has been proved very efficient for model selection,  which embeds adapted feature representation into the validation procedure to obtain unbiased estimation
of the target risk with bounded variance.

Consider that the feature extractor $F$ is an end-to-end training solution of (\ref{total-loss}), it is closely connected to the parameters $\lambda,\beta$ in (\ref{total-loss}), i.e., $F\triangleq F_{\lambda,\beta}$. Let $\lambda_1^L=\{\lambda_l\}_{i=1}^L$ be a finite collection of $\lambda_l$, where $\lambda_1 < \lambda_2 < \cdots < \lambda_L$.
Assume that $\theta^*$ is the optimal hyper-parameter which attains the optimum of (\ref{total-loss}) with the smallest possible value of $\lambda^*$ among $\lambda_1^L$, which implies
\begin{eqnarray*}
E_{\mathcal{S}, \mathcal{T}} \left[ \mathcal{L}_s(\theta^*, \mathcal{S}) + \lambda^* \mathcal{L}_e(\theta^*, \mathcal{T})\right] = \min(\lambda^*), \\
\min(\lambda^*) > \min(\lambda), \forall \lambda \in \lambda_1^L, \lambda > \lambda^*.
\end{eqnarray*}
This is because that $\mathcal{L}_e(\theta^*, \mathcal{T}) \approx 0$ could be more easily achieved whenever $\lambda \ge \lambda^*$. With the smallest possible value of $\lambda$, the degree of entropy minimization can be controlled during training with better discriminability while maintaining the transferability.

Therefore, we propose a fast model selection process for MEDM as shown in Algorithm \ref{simProc}, which can reduce the search space for two hyperparameters $\lambda, \beta$. Practically, we use $\mathcal{L}_e(\theta_i, \mathcal{T}) \le \epsilon$ for deciding if $\mathcal{L}_e(\theta_i, \mathcal{T}) \rightarrow 0$ and $\epsilon = 0.2$ is employed in experiments.

\begin{table*}[t]
\caption{Accuracy (\%) of ResNet-101 model on the VisDA dataset}
\begin{center}
\label{my-label}
\begin{tabular}{l||cccccccccccc|c}
\toprule[1.5pt]
Method     & plane & bcycl & bus  & car  & horse & knife & mcycl & person & plant & sktbrd & train & truck & mean \\\hline
DAN~\cite{Long2015MMD}     & 87.1      & 63.0      & 76.5 & 42.0   & {90.3}  & 42.9  & {85.9}        & 53.1   & 49.7  & 36.3       & {85.8}  & 20.7  & 61.1 \\
RevGrad~\cite{Ganin2015domain}      & 81.9      & { 77.7}    & 82.8 & 44.3 & 81.2  & 29.5  & 65.1        & 28.6   & 51.9  & { 54.6}       & 82.8  & 7.8   & 57.4 \\
MCD~\cite{MCD2018} & 87.0&60.9&{ 83.7}& {64.0}&88.9&{79.6}&84.7&{76.9}&{ 88.6}&40.3&83.0&25.8&{71.9}\\
CDAN~\cite{CDAN} & 85.2 &66.9 &{ 83.0}& {50.8}&84.2&{74.9}&88.1&{74.5}&{ 83.4}&76.0&81.9&38.0&{73.7}\\
BSP+CDAN~\cite{BSP} & 92.4&61.0&{ 81.0}& {57.5}&89.0&{80.6}&90.1&{77.0}&{ 84.2}&77.9&82.1&38.4&{75.9}\\\hline
MEDM(ours) & {\bf93.5}& \bf 80.4& \bf 90.8 & {\bf 70.3} & \bf 92.8 &{\bf 87.9}&{\bf 91.1}&{\bf 79.8}&{\bf 93.7}&{\bf 83.6}& \bf 86.1&{\bf 38.7} &{\bf 82.4}\\

\bottomrule[1.5pt]
\end{tabular}
\end{center}
\label{tb:visda101}
\end{table*}

\begin{table*}[t]
\caption{Accuracy (\%) on ImageCLEF-DA dataset for unsupervised domain adaptation with ResNet-50}
\begin{center}
\begin{tabular}{l||cccccc|c}
\toprule[1.5pt]
Method& I $\rightarrow$ P & P $\rightarrow$ I & I $\rightarrow$ C & C $\rightarrow$ I & C $\rightarrow$ P & P $\rightarrow$ C  & Avg \\
\midrule
 DAN~\cite{Long2015MMD} & $75.0\pm0.4$ & $86.2\pm0.2$ & $93.3\pm0.2$ & $84.1\pm0.4$ & $69.8\pm0.4$ & $91.3\pm0.4$ & 83.3\\
 RTN~\cite{long2016RTN} & $75.6\pm0.3$ & $86.8\pm0.1$ & $95.3\pm0.1$ & $86.9\pm0.3$ & $72.7\pm0.3$ & $92.2\pm0.4$ & 84.9\\
RevGrad~\cite{Ganin2015domain}   & $75.0\pm0.6$ & $86.0\pm0.3$ & $96.2\pm0.4$ & $87.0\pm0.5$ & $74.3\pm0.5$ & $91.5\pm0.6$ & 85.0\\
 MADA~\cite{Long2018AAAI}  & $75.0\pm0.3$ & $87.9\pm0.2$ & $96.0\pm0.3$ & $88.8\pm0.3$ & $75.2\pm0.2$ & $92.2\pm0.3$ & 85.8\\
 CDAN~\cite{CDAN}  & $77.7\pm0.3$ & $90.7\pm0.2$ & $\textbf{97.7}\pm0.3$ & $91.3\pm0.3$ & $74.2\pm0.2$ & $94.3\pm0.3$ & 87.7\\
 \midrule
 MEDM(Ours) &$\textbf{78.5}\pm0.5$ & $\textbf{93.0}\pm0.5$ & $96.1\pm0.2$ & $\textbf{92.8}\pm0.5$ & $\textbf{77.2}\pm0.7$ & $\textbf{95.5}\pm0.4$ & \textbf{88.9}\\
 \bottomrule[1.5pt]
\end{tabular}
\label{table:clef}
\end{center}
\end{table*}

\begin{table*}[htbp]
	\caption{Accuracy (\%) on {Office-Home} for unsupervised domain adaptation with ResNet-50}
	\addtolength{\tabcolsep}{-5pt}
	\centering
	\vskip 0.05in
	\resizebox{0.75\textwidth}{!}{%
		\begin{tabular}{l||cccccccccccc|c}
			\toprule
			Method                          & Ar$\shortrightarrow$Cl & Ar$\shortrightarrow$Pr & Ar$\shortrightarrow$Rw & Cl$\shortrightarrow$Ar & Cl$\shortrightarrow$Pr & Cl$\shortrightarrow$Rw & Pr$\shortrightarrow$Ar & Pr$\shortrightarrow$Cl & Pr$\shortrightarrow$Rw & Rw$\shortrightarrow$Ar & Rw$\shortrightarrow$Cl & Rw$\shortrightarrow$Pr & Avg           \\
			\midrule
DAN~\cite{Long2015MMD}        & 43.6                   & 57.0                   & 67.9                   & 45.8                   & 56.5                   & 60.4                   & 44.0                   & 43.6                   & 67.7                   & 63.1                   & 51.5                   & 74.3                   & 56.3          \\
DANN~\cite{DANN}   & 45.6                   & 59.3                   & 70.1                   & 47.0                   & 58.5                   & 60.9                   & 46.1                   & 43.7                   & 68.5                   & 63.2                   & 51.8                   & 76.8                   & 57.6          \\
JAN~\cite{JAN}        & 45.9                   & 61.2                   & 68.9                   & 50.4                   & 59.7                   & 61.0                   & 45.8                   & 43.4                   & 70.3                   & 63.9                   & 52.4                   & 76.8                   & 58.3          \\
CDAN~\cite{CDAN}      & 50.7                   & 70.6                   & 76.0                   & 57.6                   & 70.0                   & 70.0                   & 57.4                   & 50.9                   & 77.3                   & 70.9                   & 56.7                   & 81.6                   & 65.8          \\
LPJT~\cite{TIP2019LPJT}  & 32.5                   & 54.8                   & 57.1                 & 34.4                   & 53.8                   & 53.0                   & 35.6                   & 35.3                 & 60.9                   & 45.6                   & 39.4                   & 67.8                   & 47.5          \\
MDD~\cite{MDD} & {54.9} & {73.7} & {77.8}  & {60.0} & {71.4}  & {71.8} & {61.2} & \textbf{53.6} & {78.1} & \textbf{72.5} & \textbf{60.2} & {82.3}  & {68.1} \\
 \midrule
MEDM(Ours) & \textbf{57.1} & \textbf{76.1} & \textbf{80.0}  & \textbf{62.0} & \textbf{72.7}  & \textbf{76.0} & \textbf{62.3} & {53.4} & \textbf{81.2} & \textbf{69.9} & {59.8} & \textbf{83.9}  & \textbf{69.5} \\
			\bottomrule[1.5pt]
		\end{tabular}%
	}
	\label{table:office-home}
\end{table*}

\begin{table*}[t]
\caption{Accuracy (\%) on Office-31 dataset for unsupervised domain adaptation with ResNet-50}
\begin{center}
\begin{tabular}{l||cccccc|c}
\toprule
Method& A $\rightarrow$ W & D $\rightarrow$ W & W $\rightarrow$ D & A $\rightarrow$ D & D $\rightarrow$ A & W $\rightarrow$ A  & Avg \\
\midrule
 DAN~\cite{Long2015MMD}  & $83.8\pm0.4$ & $96.8\pm0.2$ & $99.5\pm0.1$ & $78.4\pm0.2$ & $66.7\pm0.3$ & $62.7\pm0.2$ & 81.3\\
RevGrad~\cite{Ganin2015domain}  & $82.0\pm0.4$ & $96.9\pm0.2$ & $99.1\pm0.1$ & $79.7\pm0.4$ & $68.2\pm0.4$ & $67.4\pm0.5$ & 82.2\\
 MADA~\cite{Long2018AAAI}  & $90.0\pm0.1$ & $97.4\pm0.1$ & $99.6\pm0.1$ & $87.8\pm0.2$ & $70.3\pm0.3$ & $66.4\pm0.3$ & 85.2\\
 CDAN~\cite{CDAN}  & $94.1\pm0.1$ & $98.6\pm0.1$ & $100.0\pm0.0$ & $92.9\pm0.2$ & $71.0\pm0.3$ & $69.3\pm0.3$ & 87.7\\
 CRTL~\cite{TIP2020CRTL}  & $77.4$ & $95.7$ & $97.6$ & $79.5$ & $\textbf{81.9}$ & $\textbf{81.8}$ & 85.6\\
BSP+CDAN~\cite{BSP}  & $93.3\pm0.2$ & $98.2\pm0.2$ & $100.0\pm0.0$ & $93.0\pm0.2$ & $73.6\pm0.3$ & $72.6\pm0.3$ & 88.5\\
 MDD~\cite{MDD}  &$\textbf{94.5}\pm0.3$ & $98.4\pm0.1$ & $100.0\pm0.0$ & $\textbf{93.5}\pm0.2$ & $74.6\pm0.3$ & $72.2\pm0.1$ & 88.9\\
 \midrule
 MEDM(Ours) &$93.4\pm0.6$ & $\textbf{98.8}\pm0.1$ & $\textbf{100.0}\pm0.0$ & $93.4\pm0.5$ & $74.2\pm0.2$ & $75.4\pm0.4$ & \bf89.2\\

\bottomrule[1.5pt]
\end{tabular}
\label{table:office31}
\end{center}
\end{table*}

\section{Experiments}
We evaluate MEDM with state-of-the-art domain adaptation methods for various transferring tasks, which include VisDA-2017, ImageCLEF-DA,  Office-Home, and Office-31 datasets.  VisDA-2017 is  known as the largest and highly unbalanced DA dataset and ImageCLEF-DA is a small but balanced dataset, while both Office-Home and Office-31 are slightly unbalanced DA datasets with a large number of classes (65 for Office-Home and 31 for Office-31).

\subsection{Experimental Setting}
 Throughout the experiments, we employ deep neural network architecture detailed as follows.  It has a pre-trained ResNet-50/101, followed by two fully-connected layers, FC-1 of size $2048\times1024$ and FC-2 of size $1024\times K$. Batch-normalization, ReLU activation and dropout are only employed at the FC-1 layer. The dropout rate is set to 0.5. The last label prediction layer of the network is omitted and features are extracted from the second to last layer. The Adam optimizer is employed with a learning rate of 0.0001. The batch size is set to 32. The learning rates of the layers trained from scratch are set to be 100 times those of fine-tuned layers. For model selection, we assume that $\lambda,\beta \in \{0.0, 0.1,0.2,0.3,0.4,0.5,0.6,0.7,0.8,0.9.1.0\}$. Finer search of $\lambda,\beta$ may lead to the possibly-better performance.

We report the test accuracy results of MEDM, which are compared with state-of-the-art methods: Deep Adaptation Network (DAN) \cite{Long2015MMD}, Reverse Gradient (RevGrad) \cite{Ganin2015domain}, Domain Adversarial Neural Network (DANN) \cite{DANN}, Residual Transfer Network (RTN) \cite{long2016RTN},  Multi-Adversarial Domain Adaptation (MADA) \cite{Long2018AAAI}, Generate to Adapt (GTA) \cite{GTK}, Maximum-Classifier-Discrepancy (MCD)\cite{MCD2018}, Conditional Domain Adversarial Network (CDAN) \cite{CDAN}, Locality Preserving Joint Transfer (LPJT) \cite{TIP2019LPJT}, Class-specific Reconstruction Transfer Learning (CRTL) \cite{TIP2020CRTL}, Margin Disparity Discrepancy (MDD) \cite{MDD}, Batch Spectral Penalization (BSP) + CDAN \cite{BSP}.

\begin{table*}[t]
\caption{The effect of $\beta$ on diversity maximization over VisDA-2017 (Ground-truth category diversity $H(\mathbf{q})=2.3927$)}
\begin{center}
\label{my-label}
\begin{tabular}{l||cccccccccccc|c|c}
\toprule[1.5pt]
$\beta$     & plane & bcycl & bus  & car  & horse & knife & mcycl & person & plant & sktbrd & train & truck & \bf mean &\bf diversity \\\hline
$0.2$ & {91.6}&77.6& 66.4 & {\bf 1.6} & 90.7 &{ 83.4}&{ 89.4}&{ 78.6}&{89.7}&{74.9}&89.6&{\bf 0.3} &{69.5} &{\bf 2.0619}\\
$0.3$ & {94.4}&76.2& 87.3 & {61.1} & 91.1 &{ 79.6}&{ 88.0}&{ 80.0}&{92.3}&{78.9}&89.7&{35.0} &{79.4} &{\bf 2.2446}\\
$0.4$ & {92.7}&83.1& 82.2 & {65.8} & 89.2 &{ 89.9}&{ 79.8}&{ 78.6}&{91.3}&{77.7}&90.9&{33.8} &{79.6} &{\bf 2.2493}\\
$0.5$ & {94.2}&77.8& 80.9 & {58.4} & 90.9 &{\bf 25.9}&{ 81.4}&{ 76.1}&{89.1}&{67.6}&89.7&{40.1} &{72.7} &{\bf 2.2704}\\
\bottomrule[1.5pt]
\end{tabular}
\end{center}
\label{tb:visda50-beta}
\end{table*}

\subsection{VisDA-2017}
 The Visual Domain Adaption (VisDA) challenge \cite{VisDA} aims to test domain adaptation methods's ability to transfer source knowledge and adapt it to novel target domains. As the largest domain-adaptation dataset, the VisDA dataset contains 280K images across 12 categories from the training, validation, and testing domains. The training domain (the source domain) is a set of synthetic 2D renderings of 3D models generated from different angles and with different lighting conditions, while the validation domain (the target domain) is a set of realistic photos. The source domain contains 152,397 synthetic images, and the target domain has 55,388 real images.

 \textit{Note that the target domain is highly-unbalanced, where the number of samples for each category is $[l_1,\cdots,l_{12}]=$[3646, 3475, 4690, 10401, 4691, 2075, 5796, 4000, 4549, 2281, 4236, 5548].  Therefore, the VisDA-2017 also serves to justify the suitability of MEDM for highly-unbalanced dataset}. The ground-truth category distribution can be calculated as $\mathbf{q}=\frac{1}{\sum_{i=1}^{12} l_i}[l_1,\cdots,l_{12}]$. Then, the entropy of $\mathbf{q}$ can be directly computed as $H(\mathbf{q})=2.3927$.

Table \ref{tb:visda50} compares various methods with the pretrained ResNet-50 architecture while Table \ref{tb:visda101} with the pretrained ResNet-101. Our method performs the best in the final mean accuracy among various methods. It surpasses the second best over 5\% in the final mean accuracy for the scenarios of both ResNet-50 and ResNet-101. With ResNet-101, MEDM achieves the record mean-accuracy of 82.4\%.

\subsection{ImageCLEF-DA}
ImageCLEF-DA is a publicly-available dataset for imageCLEF 2014 domain adaptation challenge. It has 12 common categories shared by the three public datasets: Caltech-256 (C), ImageNet ILSVRC 2012 (I), and Pascal VOC 2012
(P), which are also considered as three different domains. For 12 common categories, they are aeroplane, bike, bird, boat,
bottle, bus, car, dog, horse, monitor, motorbike, and people.  ImageCLEF-DA is a balanced dataset with 50 images in each category and 600 images in each domain. We consider all domain combinations and build 6 domain-adaptation tasks: I $\rightarrow$ P, P $\rightarrow$ I, I $\rightarrow$ C, C $\rightarrow$ I, C $\rightarrow$ P, and P $\rightarrow$ C.

Table \ref{table:clef} shows the classification accuracy results for various
methods on the ImageCLEF-DA dataset with the ResNet50 architecture. The result of the MEDM is obtained by only training 100 epoches, which, however, neatly outperforms the other deep adaptation methods among five adaptation tasks, I $\rightarrow$ P, P $\rightarrow$ I, C $\rightarrow$ I, C $\rightarrow$ P, P $\rightarrow$ C. The best average accuracy (88.9\%) is achieved by MEDM, which improves CDAN by about 1.2\%.

\subsection{Office-Home}
Office-Home \cite{CVPR17DHN}  is a typical dataset with a large number of classes (65 classes), which containing 15,500 images from four visually very different domains: \textbf{Ar}tistic images, \textbf{Cl}ip Art, \textbf{Pr}oduct images, and \textbf{R}eal-\textbf{w}orld images.  We consider all domain combination among these four domains, resulting 12 domain-adaptation tasks.

Table \ref{table:office-home} shows the classification accuracy results on the Office-Home dataset with the ResNet50 architecture. The result of the MEDM is obtained by only training 100 epoches and the best average accuracy (69.5\%) is achieved by MEDM, which improves MDD by about 1.4\%. This means that MEDM performs well for the datasets of large number of classes.

\subsection{Office-31}
Office-31 is a standard benchmark dataset for visual domain adaptation, which has 4652 images and 31
categories collected from three domains, Amazon (A), Webcam (W) and DSLR (D).
The Amazon (A) domain contains 2817 images downloaded from amazon.com. We consider all domain combination, resulting 6 domain-adaptation tasks.

For the transferring tasks over Office-31, we employ the same neural network architecture as ImageCLEF-DA. We compare the average classification accuracy of each method on 10 random experiments, and report the standard error of the classification accuracies by different experiments of the same transfer task. In all experiments, we train each model for 100 epochs and exceptions include D $\rightarrow$ A and W $\rightarrow$ A, where 200 epoches are employed.

We report the classification accuracy results on the Office-31 dataset as in Table \ref{table:office31}.
Office-31 has three domains of different sizes, which result into non-evenly distributed classes in each domain.

Among various domain-adaptation methods,  MEDM still performs the best for the mean accuracy. MEDM performs the best for three adaptation tasks, D $\rightarrow$ W, W $\rightarrow$ D, W $\rightarrow$ A, while MDD \cite{MDD} performs the best for the three remaining tasks.

\subsection{Ablation Study}
\subsubsection{Effect of $\lambda$ on Transferability}
\begin{table}
\caption{The effect of $\lambda$ on transferability for W $\rightarrow$ A}
\begin{center}
\label{my-label}
\begin{tabular}{l|c|c||l|c|c}
\toprule[1.5pt]
$\lambda+ \beta$ & $\mathcal{L}_e$    & Acc & $\lambda+\beta$  & $\mathcal{L}_e$  & Acc\\\hline
1.0 + 0 & 0.0 & {43.1} & 1.0 + 0.2 & 0.0 & {53.3}\\
0.8 + 0 & 0.0 & {45.9} & 0.8 + 0.2 & 0.0 & {56.5}\\
0.6 + 0 & 0.0 & {49.8} & 0.6 + 0.2 & 0.0 & {65.7}\\
0.4 + 0 & 0.1 & {54.0} & 0.4 + 0.2 & 0.1 & {74.3}\\
0.3 + 0 & 0.2 & {62.2} & 0.3 + 0.2 & 0.2 & {75.5}\\
0.2 + 0 & 0.3 & {63.2} & 0.2 + 0.2 & 0.3 & {74.5}\\
0.1 + 0 & 0.4 & {68.1} & 0.1 + 0.2 & 0.4 & {72.3}\\

\bottomrule[1.5pt]
\end{tabular}
\end{center}
\label{tb:gamma}
\end{table}

Entropy minimization in MEDM can be adjusted by varying $\lambda$.  As shown in Algorithm \ref{simProc}, MEDM encourages the use of small $\lambda$ whenever the target entropy approaches zero as the training iteration goes on. When $\beta=0$, the transferability is achieved by minimization of both the supervised loss on source domain and the entropy loss on target domain. With small $\lambda$ and keeping the (target) entropy small enough at the same time, it is expected that the end-to-end training of (\ref{total-loss}) ensures better transferability.

To investigate the choice of $\lambda$ on the final performance, we also show the accuracy of MEDM on the task $W\rightarrow A$ on the Office-31 dataset when $\lambda$ takes its value in \{0.2,0.3,0.4,0.6,0.8,1.0\} and $\beta \in \{0.0, 0.2\}$. With smallest possible value of $\lambda$ for $\mathcal{L}_e(\theta_i, \mathcal{T}) \rightarrow 0$, MEDM can achieve the best performance with a suitable choice of $\beta$ as shown in Table \ref{tb:gamma}. This means that the better transferability could be ensured with smaller possible value of $\lambda$ if $\mathcal{L}_e(\theta_i, \mathcal{T}) \rightarrow 0$ is satisfied at the end of training.

\subsubsection{Effect of $\beta$ on Diversity Maximization}
The superiority of MEDM in the VisDA challenge shows that it is very effective for highly-unbalanced target datasets, although the category diversity is expected to achieve its maximum value when the inferred categories are uniformly-distributed. We guess that it works well due to the collaboration in meeting both requirements, namely, the minimization of entropy and the maximization of category diversity, where the parameter $\beta$ (\ref{total-loss}) is used to balance two individual requirements.

To investigate the choice of $\beta$ on the final performance, we also show the accuracy of MEDM with ResNet-50 by fixing $\lambda=1.0$ and varying $\beta \in \{0.2,0.3,0.4,0.5\}$. As shown in Table \ref{tb:visda50-beta}, we observed that  $E_{\mathcal{T}}\{\mathcal{L}_d(\mathcal{T})\}$ after training 10 epoches is always less than the entropy of the ground-truth target category distribution $H(\mathbf{p})=2.3978$, which means that the maximization of $\mathcal{L}_d(\mathcal{T})$ under the constraint of entropy minimization does not necessarily produce the uniformly-distributed categories. When $\beta < 0.3$, it results into poorer performance as some categories (car/truck) simply fail to be identified. With the increase of $\beta$, the practical diversity also grows. When $\beta$ increases to 0.5, it also results into significantly worse performance compared to $\beta=0.4$. Essentially, individual entropy minimization may automatically tradeoff with diversity maximization \textit{if the values of $\lambda,\beta$ are properly validated by the use of DEV \cite{DEV}}.

\subsubsection{Further Inclusion of Domain Difference Loss}
We also investigate the possibility of a further inclusion of domain difference loss (e.g. MMD loss \cite{Long2015MMD} or adversarial loss \cite{Ganin2015domain}) in (\ref{total-loss}). However, experiments always show worse performance.

\section{Conclusion}
Entropy minimization has been shown to be a powerful tool for domain adaptation. However, entropy minimization is insufficient for the minimization of the target risk and trivial solutions are often observed. In this paper, we propose to employ diversity maximization for avoiding the trivial solutions. We show there exists a tradeoff for entropy minimization and diversity maximization towards the close-to-perfect domain adaptation. With the recently-proposed unsupervised model selection method, we show that the proposed MEDM outperforms state-of-the-art methods on several domain adaptation datasets, boosting a large margin especially on the largest VisDA dataset for cross-domain object classification.

\appendix[Trivial Solution Demonstration for Entropy-Minimization-Only Method]
\begin{figure}[htb] 
   \centering
   \includegraphics[width=0.45\textwidth]{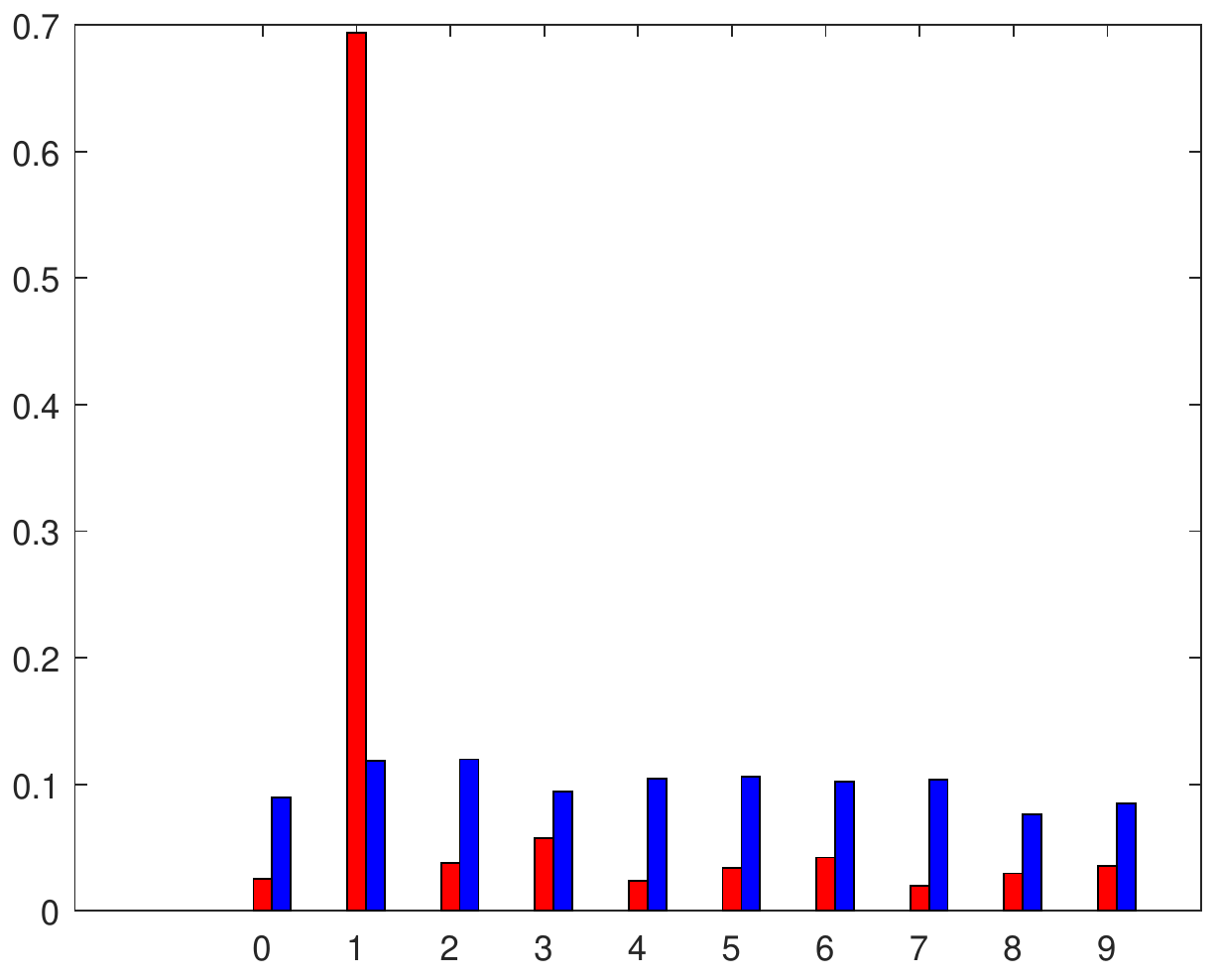} 
   \caption{Predicted category distribution (\ref{cat_prob}) for both MEDM (blue) and EMO (\ref{entropy-min}) (red) for a batch of target samples after a training iterations of 10000: EMO (Entropy-minimization-only) results into a trivial solution, where digit-1 dominates among others.}
   \label{fig:svhn}
\end{figure}

\begin{table}
\caption{Average accuracy (\%) for SVHN $\rightarrow$ MNIST}
\begin{center}
\begin{tabular}{l|c}
\toprule
Method& Acc \\
\midrule
RevGrad (\cite{Ganin2015domain}) & $73.9$ \\
ADDA (\cite{long2016RTN}) & $76.0$ \\
DTN (\cite{Taigman2017}) & $84.4$ \\
TRIPPLE (\cite{Saito2017domain}) & $86.2$ \\
 COREL (\cite{Sun2016ICCV}) & $90.2$ \\
 MECA (\cite{CPUA2018}) &$95.2$ \\
 \midrule
 EMO (\ref{entropy-min})  &$44.1$ \\
 MEDM(Ours) &$\textbf{98.7}\pm0.3$ \\
\bottomrule[1.5pt]
\end{tabular}
\label{table:svhn}
\end{center}
\end{table}
In this appendix, we consider the transfer task of SVHN $\rightarrow$ MNIST for demonstrating the trivial solutions of EMO.

The MNIST handwritten digits database has a training set of 60,000 examples, and a test set of 10,000 examples.
The digits have been size-normalized and centered in fixed-size images. SVHN is a real-world image dataset for machine learning and object recognition algorithms with minimal requirement on data preprocessing and formatting. It has 73257 digits for training, 26032 digits for testing. We focus on the task SVHN $\rightarrow$ MNIST in experiments.

We employed the CNN architecture used in \cite{Ganin2015domain}. The number of training iterations is set to $50000$ and the learning rate is set to 0.001. We run 10 experiments for computing average accuracy and its deviation.

Firstly, we show that EMO simply results into a trivial solution as indicated in Figure \ref{fig:svhn}, where digit-1 dominates among other categories for the model trained with entropy-minimization-only (\ref{entropy-min}) ($\lambda=1$). By inferring several target batches over the trained model, we observed that digit-1 always dominates for EMO. For MEDM, the predicted category distribution, however, is very close to the true uniform distribution.

Then, we compare our method with six methods in Table \ref{table:svhn} for unsupervised domain adaptation including state-of-the-art methods in visual domain adaptation: Reverse Gradient (RevGrad) \cite{Ganin2015domain}, Adversarial Discriminative Domain Adaptation (ADDA) \cite{Tzeng2017domain}, Domain Transfer Network (DTN) \cite{Taigman2017},TRIPPLE \cite{Saito2017domain}, CORrelation ALignment (CORAL) \cite{Sun2016ICCV}, Minimal-Entropy Correlation Alignment (MECA) \cite{Morerio2018MECA}. MEDM performs the best and it achieves the average accuracy of 98.7\%, which improves 3.5\% compared to MECA. As also shown in Table \ref{table:svhn}, EMO simply fails to work.


\begin{thebibliography}{10}
\providecommand{\url}[1]{#1}
\csname url@samestyle\endcsname
\providecommand{\newblock}{\relax}
\providecommand{\bibinfo}[2]{#2}
\providecommand{\BIBentrySTDinterwordspacing}{\spaceskip=0pt\relax}
\providecommand{\BIBentryALTinterwordstretchfactor}{4}
\providecommand{\BIBentryALTinterwordspacing}{\spaceskip=\fontdimen2\font plus
\BIBentryALTinterwordstretchfactor\fontdimen3\font minus
  \fontdimen4\font\relax}
\providecommand{\BIBforeignlanguage}[2]{{%
\expandafter\ifx\csname l@#1\endcsname\relax
\typeout{** WARNING: IEEEtran.bst: No hyphenation pattern has been}%
\typeout{** loaded for the language `#1'. Using the pattern for}%
\typeout{** the default language instead.}%
\else
\language=\csname l@#1\endcsname
\fi
#2}}
\providecommand{\BIBdecl}{\relax}
\BIBdecl

\bibitem{He2016ResNet}
H.~Kaiming, Z.~Xiangyu, R.~Shaoqing, and S.~Jian, ``Deep residual learning for
  image recognition,'' in \emph{CVPR}, 2016.

\bibitem{Huang2017DenseNet}
H.~Gao, L.~Zhuang, M.~L. van der, and W.~K. Q., ``Densely connected
  convolutional networks,'' in \emph{CVPR}, 2017.

\bibitem{antonio2011domain-shift}
A.~Torralba and A.~A. Efros, ``Unbiased look at dataset bias,'' in \emph{CVPR},
  2011.

\bibitem{TIP2020CRTL}
S.~Wang, L.~Zhang, W.~Zuo, and B.~Zhang, ``Class-specific reconstruction
  transfer learning for visual recognition across domains,'' \emph{IEEE Trans.
  Image Process.}, vol.~29, pp. 2424 -- 2438, Jan. 2020.

\bibitem{TIP2020ADDA}
A.~Chadha and Y.~Andreopoulos, ``Improved techniques for adversarial
  discriminative domain adaptation,'' \emph{IEEE Trans. Image Process.},
  vol.~29, pp. 2622 -- 2637, Jan. 2020.

\bibitem{TIP2020Mul}
B.~Gholami, P.~Sahu, O.~Rudovic, K.~Bousmalis, and V.~Pavlovic, ``Unsupervised
  multi-target domain adaptation: An information theoretic approach,''
  \emph{IEEE Trans. Image Process.}, vol.~29, pp. 3993 -- 4001, Jan. 2020.

\bibitem{TIP2019LPJT}
J.~Li, M.~Jing, K.~Lu, L.~Zhu, and H.~T. Shen, ``Locality preserving joint
  transfer for domain adaptation,'' \emph{IEEE Trans. Image Process.}, vol.~28,
  pp. 6103 -- 6115, Dec. 2019.

\bibitem{TIP2018}
H.~Lu, C.~Shen, Z.~Cao, Y.~Xiao, and A.~van~den Hengel, ``An embarrassingly
  simple approach to visual domain adaptation,'' \emph{IEEE Trans. Image
  Process.}, vol.~27, pp. 3403 -- 3417, 2018.

\bibitem{TIP2018DICD}
S.~Li, S.~Song, G.~Huang, Z.~Ding, and C.~Wu, ``Domain invariant and class
  discriminative feature learning for visual domain adaptation,'' \emph{IEEE
  Trans. Image Process.}, vol.~27, pp. 4260 -- 4273, Sept. 2018.

\bibitem{chang2019bn}
W.-G. Chang, T.~You, S.~Seo, S.~Kwak, and B.~Han, ``Domain-specific batch
  normalization for unsupervised domain adaptation,'' in \emph{CVPR}, 2019.

\bibitem{roy2019DWT}
S.~Roy, A.~Siarohin, E.~Sangineto, S.~R. Bulo, N.~Sebe, and E.~Ricci,
  ``Unsupervised domain adaptation using feature-whitening and consensus
  loss,'' in \emph{CVPR}, 2019.

\bibitem{chen2019PFA}
C.~Chen, W.~Xie, W.~Huang, Y.~Rong, X.~Ding, Y.~Huang, T.~Xu, and J.~Huang,
  ``Progressive feature alignment for unsupervised domain adaptation,'' in
  \emph{CVPR}, 2019.

\bibitem{kim2019GPDA}
M.~Kim, P.~Sahu, B.~Gholami, and V.~Pavlovic, ``Unsupervised visual domain
  adaptation: A deep max-margin gaussian process approach,'' in \emph{CVPR},
  2019.

\bibitem{Chen2011CoTrain}
C.~Minmin, W.~K. Q, and B.~John, ``Co-training for domain adaptation,'' in
  \emph{NIPS}, 2011.

\bibitem{bousmalis2016domain}
B.~Konstantinos, T.~George, S.~Nathan, K.~Dilip, and E.~Dumitru, ``Domain
  separation networks,'' in \emph{NIPS}, 2016.

\bibitem{Long2015MMD}
M.~Long, Y.~Cao, J.~Wang, and M.~I. Jordan, ``Learning transferable features
  with deep adaptation networks,'' in \emph{ICML}, 2015.

\bibitem{Tzeng2014MMD}
E.~Tzeng, J.~Hoffman, N.~Zhang, and K.~Saenko, ``Deep domain confusion:
  Maximizing for domain invariance.'' in \emph{\emph{arXiv:1412.3474}}, 2014.

\bibitem{Sun2016ICCV}
B.~Sun and K.~Saenko, ``Deep coral: correlation alignment for deep domain
  adaptation,'' in \emph{ICCV workshop on Transferring and Adapting Source
  Knowledge in Computer Vision}, 2016.

\bibitem{Goodfellow2014GAN}
G.~Ian, P.-A. Jean, M.~Mehdi, X.~Bing, W.-F. David, O.~Sherjil, C.~Aaron, and
  B.~Yoshua, ``Generative adversarial nets,'' in \emph{NIPS}, 2014.

\bibitem{Ganin2015domain}
Y.~Ganin and V.~Lempitsky, ``Unsupervised domain adaptation by
  backpropagation,'' in \emph{ICML}, 2015.

\bibitem{CyCADA}
J.~Hoffman, E.~Tzeng, T.~Park, J.-Y. Zhu, P.~Isola, K.~Saenko, A.~Efros, and
  T.~Darrell, ``Cycada: Cycle-consistent adversarial domain adaptation,'' in
  \emph{ICML}, 2018.

\bibitem{MCD2018}
K.~Saito, K.~Watanabe, Y.~Ushiku, and H.~T. arada, ``Maximum classifier
  discrepancy for unsupervised domain adaptation,'' in \emph{CVPR}, 2018.

\bibitem{Haeusser2017domain}
P.~Haeusser, T.~Frerix, A.~Mordvintsev, and D.~Cremers, ``Associative domain
  adaptation,'' in \emph{ICCV}, 2017.

\bibitem{Tzeng2015domain}
E.~Tzeng, J.~Hoffman, T.~Darrell, and K.~Saenko, ``Simultaneous deep transfer
  across domains and tasks,'' in \emph{ICCV}, 2015.

\bibitem{Carlucci2015domain}
F.~M. Carlucci, L.~Porzi, B.~Caputo, E.~Ricci, and S.~R. Bulo, ``Autodial:
  Automatic domain alignment layers,'' in \emph{ICCV}, 2017.

\bibitem{Saito2017domain}
K.~Saito, Y.~Ushiku, and T.~Harada, ``Asymmetric tri-training for unsupervised
  domain adaptation,'' in \emph{ICML}, 2017.

\bibitem{Shu2017DIRT-T}
R.~Shu, H.~Bui, and S.~Ermon, ``A dirt-t approach to unsupervised domain
  adaptation,'' in \emph{NIPS}, 2017.

\bibitem{Tzeng2017domain}
E.~Tzeng, J.~Hoffman, K.~Saenko, and T.~Darrell, ``Adversarial discriminative
  domain adaptation,'' in \emph{CVPR}, 2017.

\bibitem{Morerio2018MECA}
M.~Pietro, C.~Jacopo, and M.~Vittorio, ``Minimal-entropy correlation allignment
  for unsupervised deep domain adaptation,'' in \emph{ICLR}, 2018.

\bibitem{DEV}
K.~You, X.~Wang, M.~Long, and M.~I. Jordan, ``Towards accurate model selection
  in deep unsupervised domain adaptation,'' in \emph{ICML}, 2019.

\bibitem{BenDavid2007Bound}
S.~Ben-David, J.~Blitzer, K.~Crammer, and F.~Pereira, ``Analysis of
  representations for domain adaptation,'' in \emph{NIPS}, 2007.

\bibitem{Blitzer2008Bound}
J.~Blitzer, K.~Crammer, A.~Kulesza, F.~Pereira, and J.~Wortman, ``Learning
  bounds for domain adaptation,'' in \emph{NIPS}, 2008.

\bibitem{EM_Bengio2005}
Y.~Grandvalet and Y.~Bengio, ``Semi-supervised learning by entropy
  minimization,'' in \emph{NIPS}, 2005.

\bibitem{long2016RTN}
M.~Long, H.~Zhu, J.~Wang, and M.~I. Jordan, ``Unsupervised domain adaptation
  with residual transfer networks,'' in \emph{NIPS}, 2016.

\bibitem{Saito2018drop}
K.~Saito, Y.~Ushiku, T.~Harada, and K.~Saenko, ``Adversarial dropout
  regularization,'' in \emph{ICLR}, 2018.

\bibitem{CDAN}
M.~Long, ``conditional adversarial domain adaptation,'' in \emph{NIPS}, 2018.

\bibitem{GTK}
S.~Sankaranarayanan, Y.~Balaji, C.~D. Castillo, and R.~Chellappa, ``Generate to
  adapt: Aligning domains using generative adversarial networks,'' in
  \emph{CVPR}, 2018.

\bibitem{MDD}
Y.~Zhang, T.~Liu, M.~Long, and M.~I. Jordan, ``Bridging theory and algorithm
  for domain adaptation,'' in \emph{ICML}, 2019.

\bibitem{BSP}
X.~Chen, S.~Wang, M.~Long, and JianminWang, ``Transferability vs.
  discriminability: Batch spectral penalization for adversarial domain
  adaptation,'' in \emph{ICML}, 2019.

\bibitem{Long2018AAAI}
Z.~Pei, Z.~Cao, M.~Long, and J.~Wang, ``Multi-adversarial domain adaptation,''
  in \emph{AAAI}, 2018.

\bibitem{DANN}
Y.~Ganin, E.~Ustinova, H.~Ajakan, P.~Germain, H.~Larochelle, F.~Laviolette,
  M.~Marchand, and V.~Lempitsky, ``Domain-adversarial training of neural
  networks,'' \emph{Journal of Machine Learning Research (JMLR)}, vol. 17(1),
  pp. 2016 -- 2030, 2016.

\bibitem{JAN}
M.~Long, J.~Wang, and M.~I. Jordan, ``Deep transfer learning with joint
  adaptation networks,'' in \emph{ICML}, 2017.

\bibitem{VisDA}
X.~Peng, B.~Usman, N.~Kaushik, J.~Hoffman, D.~Wang, and K.~Saenko, ``Vis{DA}: A
  synthetic-to-real benchmark for visual domain adaptation,'' in \emph{CVPR
  Workshops}, 2018.

\bibitem{CVPR17DHN}
H.~Venkateswara, J.~Eusebio, S.~Chakraborty, and S.~Panchanathan, ``Deep
  hashing network for unsupervised domain adaptation,'' in \emph{CVPR}, 2017.

\bibitem{Taigman2017}
Y.~Taigman, A.~Polyak, and L.~Wolf, ``Unsupervised cross-domain image
  generation,'' in \emph{ICLR}, 2017.

\bibitem{CPUA2018}
J.~Manders, E.~Marchiori, and T.~Laarhoven, ``Adversarial alignment of class
  prediction uncertainties for domain adaptation,'' in \emph{ICPRAM}, 2019.

\end{thebibliography}

\end{document}